\newcommand{\lnorm}{\left\lVert}
\newcommand{\rnorm}{\right\rVert}
\def\eqref#1{equation~\ref{#1}}
\def\1{\bm{1}}
\def\rve{{\mathbf{e}}}
\def\rvx{{\mathbf{x}}}
\def\rvy{{\mathbf{y}}}
\def\rvz{{\mathbf{z}}}
\DeclareMathAlphabet{\mathsfit}{\encodingdefault}{\sfdefault}{m}{sl}
\SetMathAlphabet{\mathsfit}{bold}{\encodingdefault}{\sfdefault}{bx}{n}
\def\gH{{\mathcal{H}}}
\def\gL{{\mathcal{L}}}
\def\gS{{\mathcal{S}}}
\DeclareMathOperator*{\argmax}{arg\,max}
\newcommand{\appendixheading}[1]{
  \section*{\Large\bfseries #1}
  \addcontentsline{toc}{section}{#1}
}
\definecolor{customcolor}{HTML}{E9D8A6}
\newcolumntype{a}{>{\columncolor{customcolor}}c}
\newcolumntype{b}{>{\columncolor{white}}c}
\def\set@curr@file#1{\def\@curr@file{#1}} 
\title[Cross-Disease Transferability in Zero-Shot Binary Classification of Chest X-Rays ]{XDT-CXR: Investigating Cross-Disease Transferability in Zero-Shot Binary Classification of Chest X-Rays}
\author{\Name{Umaima Rahman}
       \Email{umaima.rahman@mbzuai.ac.ae}\\ 
       \Name{Abhishek Basu}
       \Email{abhishek.basu@mbzuai.ac.ae}\\
       \Name{Muhammad Uzair Khattak}
       \Email{uzair.khattak@mbzuai.ac.ae}\\
       \addr Department of Computer Vision\\
       Mohamed Bin Zayed University of Artificial Intelligence\\
       Masdar City, Abu Dhabi, UAE.\\ \newline
       \Name{Aniq Ur Rahman}
       \Email{aniq.rahman@eng.ox.ac.uk}\\ 
        \addr Department of Engineering Science\\
        University of Oxford\\
       Oxford, UK.
       } 
\begin{document}

\maketitle

\begin{abstract}
This study explores the concept of cross-disease transferability (XDT) in medical imaging, focusing on the potential of binary classifiers trained on one disease to perform zero-shot classification on another disease affecting the same organ. Utilizing chest X-rays (CXR) as the primary modality, we investigate whether a model trained on one pulmonary disease can make predictions about another novel pulmonary disease, a scenario with significant implications for medical settings with limited data on emerging diseases. The XDT framework leverages the embedding space of a vision encoder, which, through kernel transformation, aids in distinguishing between diseased and non-diseased classes in the latent space. This capability is especially beneficial in resource-limited environments or in regions with low prevalence of certain diseases, where conventional diagnostic practices may fail. However, the XDT framework is currently limited to binary classification, determining only the presence or absence of a disease rather than differentiating among multiple diseases. This limitation underscores the supplementary role of XDT to traditional diagnostic tests in clinical settings. Furthermore, results show that \textsf{XDT-CXR} as a framework is able to make better predictions compared to other zero-shot learning (ZSL) baselines. Our code and pre-trained models are available at \href{https://github.com/rumaima/xdt_cxr}{github.com/rumaima/xdt\_cxr}.
\end{abstract}

\section{Introduction}

Motivated by the fact that diseases on the same organ have visual similarities \citep{ozger2020factors}, we aim to answer the question: \textit{``Can a binary classifier be trained on one disease to perform zero-shot classification on another disease on the same organ using medical images of the same modality?"} We refer to the capability of a model to perform cross-disease zero-shot classification as cross-disease transferability (XDT). In this work, we focus on chest X-rays (CXR) to construct a cross-disease transferable  binary classifier. For example, if we have a model that is trained on pneumonia dataset \citep{kermany2018identifying} but it is able to accurately classify COVID-19 samples \citep{rahman2021exploring} then this will be helpful for cases where we have limited training data on COVID. Furthermore, in the future if we have a novel disease that affects the same organ then we can make predictions leveraging the cross-disease capabilities of that model. Furthermore, XDT will be particularly beneficial for resource-limited settings in under-developed countries with scant laboratory facilities. Also, in low TB burden countries where TB diagnosis may not be as readily discernible on X-rays and therefore would benefit from the application of cross-disease transferability. 
Moreover, the XDT framework can also be applied to train on a smaller labeled dataset and evaluate on a larger unlabeled dataset. The XDT framework heavily relies on the success of the embedding space of the vision encoder which is further transformed through a kernel to enable segregation among classes in the latent space. 

Our XDT framework has a limitation that it can only perform binary classification i.e., it can say whether a medical image is diagnosed with a disease or not and it cannot be used to classify an image among different diseases. This is supported by the fact that visual examination like X-rays and CT-scans especially for lungs is performed in addition to viral/bacterial tests and only serves as a reinforcement. Furthermore, this study highlights several significant insights into the application of our XDT framework, particularly regarding novel disease detection and resource management. One of the most notable findings is the ability to use models trained on one disease, such as pneumonia, to predict another disease, like COVID-19. This cross-disease transferability is particularly valuable for making quick and basic decisions in triage scenarios. For instance, clinicians can use XDT to categorize two breathless patients into different sickness categories, helping to determine which patient is more critically ill based on X-ray results.
The rapid and low-radiation nature of X-rays makes them an efficient tool for initial assessments. This is especially beneficial in rural or resource-limited settings where clinicians must decide which patients need to be sent to better-equipped urban hospitals. By serving as a peripheral screening tool, XDT can assist in managing resources effectively within budget constraints. In terms of pathology testing, machine learning models can aid in answering specific clinical questions, such as whether a patient has COVID-19. While traditional models might identify abnormal respiratory patterns without specifying the exact condition, an effective XDT model can act as a rule-out test. If the model indicates that a patient is healthy, it provides reassurance to discharge them, reducing unnecessary hospital admissions. Additionally, XDT can help identify other pathologies that mimic respiratory diseases. For example, in cases of pulmonary embolism, where a patient feels like they have pneumonia but actually has a blood clot in the lungs, the model can differentiate between the conditions, aiding in accurate diagnosis and appropriate treatment.

\subsection*{Generalizable Insights about Machine Learning in the Context of Healthcare}
This study explores the concept of cross-disease transferability in medical imaging, focusing on the potential of binary classifiers trained on one disease to perform zero-shot classification on another disease affecting the same organ. XDT capability is especially beneficial in resource-limited environments or in regions with low prevalence of certain diseases, where conventional diagnostic practices may fail. XDT can serve as a peripheral screening tool, helping clinicians categorize patients, answer specific diagnostic questions, and rule out diseases, thereby optimizing patient care and resource allocation. Furthermore, while traditional models might identify abnormal respiratory patterns without specifying the exact condition, an effective XDT model can act as a rule-out test. If the model indicates that a patient is healthy, it provides reassurance to discharge them, reducing unnecessary hospital admissions.  Although currently limited to binary classification, determining only the presence or absence of a disease, the XDT framework demonstrates superior performance compared to other zero-shot learning (ZSL) baselines. This underscores its supplementary role to traditional diagnostic tests and highlights its potential to enhance diagnostic accuracy, optimize healthcare delivery, and support clinical decision-making in various settings.

\section{Related Work}
\subsection{Zero Shot Learning}
Zero-Shot Learning (ZSL) is a machine learning technique that addresses the challenge of classifying previously unseen classes during the training process. The key idea behind ZSL is to leverage the semantic information of the seen classes to generalize and transfer knowledge to the unseen classes.
\cite{hayat2021multilabel} addresses a fundamental limitation of supervised learning models in chest X-ray (CXR) classification - their inability to predict unseen disease classes during inference. They propose a multi-label generalized zero-shot learning (\textsf{CXR-ML-GZSL}) network to overcome this. The innovation of \textsf{CXR-ML-GZSL} is its ability to learn a visual representation guided by the input's corresponding semantics extracted from medical text. This allows the model to map visual and semantic modalities to a shared latent space, ensuring relevant labels are ranked higher. Crucially, the network is trained only on seen classes, with no auxiliary data for unseen classes, demonstrating its generalisation capability. Experiments on the NIH Chest X-ray dataset show that \textsf{CXR-ML-GZSL} outperforms baselines in recall, precision, F1 score, and ROC-AUC. \cite{Paul} proposed a novel strategy for generalized zero-shot diagnosis of chest radiographs. They leverage the potential of multi-view semantic embedding, which is a promising yet underexplored approach for zero-shot learning (ZSL). Their design also incorporates a self-training phase to address the issue of noisy labels and improve performance on classes not seen during training. Through rigorous experiments, they demonstrate that their model, trained on a single dataset, can consistently perform well across test datasets from diverse sources, including those with significantly different quality.

\subsection{Deep Learning for Chest X-Rays}
Convolutional Neural Networks (CNNs) have emerged as a popular deep learning approach in the domain of medical image classification. This can be attributed to several key characteristics of CNNs, including their capacity to effectively learn complex features from data using a relatively smaller number of trainable parameters \citep{deepleraning}. Additionally, the weight-sharing mechanism employed in CNNs allows for efficient utilization of the available parameters, leading to improved performance and generalization capabilities \citep{lecun2015deep}, which are crucial in medical applications where labeled data can be scarce.

\cite{Al-Waisy2020} developed the COVID-CheXNet, a hybrid deep learning framework for timely diagnosis of COVID-19 infection using chest X-ray images, in response to the increasing pressure on healthcare systems during the COVID-19 outbreak. The system employed a multi-step approach, first enhancing the contrast of the X-Ray image through contrast-limited adaptive histogram equalization, the noise level was reduced through Butterworth bandpass filtering, and then fusing the results obtained from two pretrained deep learning models, ResNet34 \citep{he2016deep} and High-Resolution Network, using a parallel architecture to provide radiologists with high confidence in discriminating between healthy and COVID-19-infected individuals. \cite{kundu2021pneumonia} developed a computer-aided diagnosis system for automatic pneumonia detection using chest X-ray images. The authors employed deep transfer learning to address the limited availability of data and designed an ensemble of three CNN models, including GoogLeNet \citep{szegedy2015going}, ResNet-18, and DenseNet-121 \citep{huang2017densely}. Their novel approach involved using a weighted average ensemble technique, where the weights assigned to the base learners were determined by fusing the scores of four standard evaluation metrics: precision, recall, F1-score, and the area under the curve, rather than relying on experimental tuning. However, in some instances the ensemble framework failed to produce correct predictions, and because three CNN models are required to train the proposed ensemble, the computation cost is higher than that of the CNN baselines in the literature.

\subsection{Manifold Learning}
Manifold learning is a family of unsupervised techniques for extracting low-dimensional representations from high-dimensional data by exploiting the intrinsic geometry of the data. Prominent manifold learning algorithms include Isometric Feature Mapping (Isomap), Local Linear Embedding (LLE) \citep{roweis2000nonlinear}, and t-Distributed Stochastic Neighbor Embedding (t-SNE)\citep{van2008visualizing}. \cite{souvenir2006image} parametrized cardiopulmonary image sets using Isomap and reorder the images based on these learned parameters. This reordering results in minimal motion between neighboring images, compared to the original temporal ordering. This simplifies the point correspondence problem and allows pairwise deformations to be estimated and extended into global deformation models.

\cite{zhang2005segmentation} and \cite{zhang2006manifold} build upon active contour frameworks for image segmentation, applying them to noisy cardiopulmonary images. They leverage Isomap with domain-specific distance metrics to learn a parametrization capturing the underlying degrees of freedom in the data. The authors then calculate the contours across all images simultaneously, using the learned parameters as additional shape constraints. 

\subsection{Vision Language Models}
The recent advancements in the field of vision-language models (VLMs) have gained significant attention \citep{radford2021learning}, \citep{yao2021filip}. VLMs are pretrained on a vast collection of image-text pairs available on the internet. This extensive pretraining enables the VLMs to learn the underlying relationships between visual and textual information. The pretraining process of VLMs is typically guided by specific vision-language objectives (\citep{pmlr-v139-radford21a}, \citep{yu2022coca}, \citep{yao2021filip}), which facilitate the learning of meaningful image-text correspondences from the large-scale dataset.

Vision-language models like CLIP\citep{radford2021learning} employs a contrastive learning approach between images and text. The objective is to pull the paired image and text representations closer together in the shared embedding space, while pushing apart the representations of unrelated image-text pairs. Through this contrastive training on large-scale image-text data, the pretrained \textsf{CLIP} model is able to capture rich correspondences between visual and linguistic information. As a result, the pretrained \textsf{CLIP} model can directly perform zero-shot predictions on various tasks by matching the embeddings of any given image and text (e.g., class names), without the need for additional fine-tuning. This zero-shot inference performance also depends on the choice of text prompts used to define the class names. The text prompts for classification can be manually designed using prompt engineering \citep{an2023more} or automatically learned using prompt optimization techniques \citep{zhou2022learning, khattak2023maple, khattak2024learning, khattak2023self, shu2022test, abdul2024align, zhou2022conditional}.  

However, when it comes to specialized domains like healthcare, the applicability of general vision-language models faces limitations. Medical image-text datasets are significantly smaller compared to the vast web-scraped image-caption pairs used to train models like CLIP. Additionally, the semantic relationships between medical images and their corresponding reports can be more nuanced, with potential for false negatives cases where images and reports from different patients convey similar clinical information, yet are incorrectly treated as unrelated during training.
To address these challenges, recent research has explored domain-specific adaptations of vision-language models for the medical field. One such approach is \textsf{MedCLIP}  \citep{wang2022MedCLIP} , which builds upon the core contrastive learning framework of \textsf{CLIP} but introduces several key innovations. First, \textsf{MedCLIP} decouples the image and text modalities, allowing it to effectively scale up the usable training data in a combinatorial manner at a low cost. Second, \textsf{MedCLIP} replaces the standard InfoNCE loss with a semantic matching loss based on medical domain knowledge, helping to eliminate the problematic false negatives encountered in previous methods.

\section{Methodology}
A medical image classifier (MIC) architecture is depicted in Fig.~\ref{fig:fig1} wherein a medical image $\rvx$ is passed through an encoder $f$ which generates a latent vector $\rvz \in \mathbb{R}^d$. This latent vector is then passed through a classifier $g$ to get the normalized class prediction vector $\hat{\rvy}$.
\begin{figure}[h!]
    \centering
    \includegraphics[width=0.5\columnwidth]{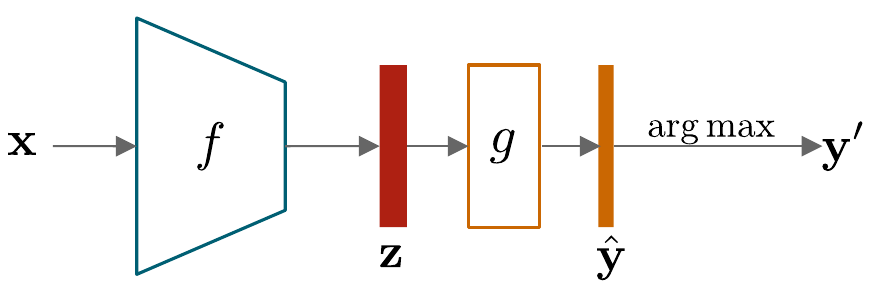}
    \caption{Architecture of the Medical Image Classifier. The encoder $f$ generates a latent vector $\mathbf{z}$, which is passed to the classifier $g$ to produce the normalized class prediction vector $\hat{\mathbf{y}}$. The final output class $\mathbf{y}'$ is obtained via $\arg\max$.}
    \label{fig:fig1}
\end{figure}

Consider a set of pulmonary diseases $\mathfrak{D} \triangleq \{ \mathfrak{d}_1, \mathfrak{d}_2, \cdots \mathfrak{d}_n \}$. The binary-class dataset of disease $\mathfrak{d}_i$ is represented by $\gS_i \triangleq \gS_i^+ \cup \gS_i^-$, where $\gS_i^+$ denotes the set of $\mathfrak{d}_i$-positive samples labelled $\mathfrak{d}_i$, and $\gS_i^-$ is the set of $\mathfrak{d}_i$-negative samples labelled $\mathfrak{d}_i'$. A classifier trained on the dataset of disease $\mathfrak{d}_i$ is denoted by $g_i$.
Then, the probability that a chest X-ray (CXR) image is classified as $\mathfrak{d}_i$-positive is denoted as $P(\mathfrak{d}_i) \triangleq P(g_i \circ f(\rvx) = \mathfrak{d}_i : \forall \rvx \in \gS_i )$. Similarly, we also denote the probability of a $\mathfrak{d}_i$-negative samples as $P(\mathfrak{d}_i')$. Consider a classifier $g$ which instead of focusing on a specific disease, classifies whether a given sample from any binary-class CXR dataset is healthy $\mathsf{H}$ or unhealthy $\mathsf{H}'$ in which a sample is considered healthy if it has none of the diseases in $\mathfrak{D}$.
\begin{align}
    P(\mathsf{H}) = P\left( \bigcap_{i=1}^{n} \mathfrak{d}_i' \right)
    \implies P(\mathsf{H}) \leq P(\mathfrak{d}_i'), \quad \forall i \in [n].
\end{align}
The binary classifier $g_i$ can be viewed as a hyperplane in $\mathbb{R}^d$ segregating the $\mathfrak{d}_i$ positive and negative samples in $\gS_i^+$ and $\gS_i^-$ (see Fig.~\ref{fig:decision_boundaries}). Some negative samples may be positive for some other disease $\mathfrak{d}_j \in \mathfrak{D}, j \neq i$ which can be reflected in the generalised healthy vs. unhealthy classifier $g$ as the relation $P(\mathsf{H}) = P(\mathfrak{d}_i') - \epsilon_i$ which forms the basis of Proposition~\ref{as:hyper}.
\begin{figure}[h!]
    \centering
    \includegraphics[width=0.6\columnwidth]{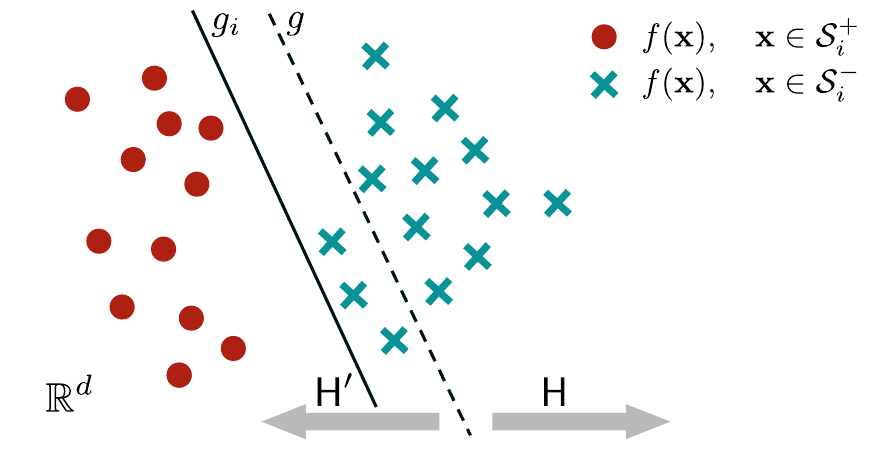}
    \caption{Decision boundaries of disease-specific and healthy/unhealthy binary classifiers. The figure shows two decision boundaries: $g_i$ for a disease-specific classifier and $g$ for a general classifier, separating the feature space $\mathbb{R}^d$ into regions $H'$ and $H$. Samples $f(\mathbf{x}) \in \mathcal{S}_i^+$ (disease-specific) and $f(\mathbf{x}) \in \mathcal{S}_i^-$ (healthy) are represented by different markers.}

    \label{fig:decision_boundaries}
\end{figure}
\begin{proposition}
    $P(\mathsf{H}) = P(\mathfrak{d}_i') - \epsilon_i, \, \exists \epsilon_i \in [0, P(\mathfrak{d}_i')]\,\, \forall i \in [n]$.
    \label{as:hyper}
\end{proposition}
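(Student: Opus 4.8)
The plan is to recognise that the proposition merely assigns a name to the nonnegative gap between $P(\mathfrak{d}_i')$ and $P(\mathsf{H})$, so that its entire content reduces to the monotonicity inequality already displayed above the statement together with the nonnegativity of probabilities. No machinery is needed beyond elementary measure-theoretic facts, and the argument is the same for every $i \in [n]$.

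First I would record the set identity $\mathsf{H} = \bigcap_{i=1}^{n} \mathfrak{d}_i'$, which encodes the definition that a sample is healthy exactly when it is negative for every disease in $\mathfrak{D}$. Since the intersection is contained in each event $\mathfrak{d}_i'$, monotonicity of the probability measure gives $P(\mathsf{H}) \leq P(\mathfrak{d}_i')$ for all $i \in [n]$, which is precisely the inequality established immediately before the proposition. Next, for each fixed $i$ I would define the slack by $\epsilon_i \triangleq P(\mathfrak{d}_i') - P(\mathsf{H})$. By construction this yields the claimed equality $P(\mathsf{H}) = P(\mathfrak{d}_i') - \epsilon_i$, so only the two range constraints remain. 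The lower bound $\epsilon_i \geq 0$ is exactly the monotonicity inequality from the previous step, and the upper bound $\epsilon_i \leq P(\mathfrak{d}_i')$ follows from $P(\mathsf{H}) \geq 0$, which forces $\epsilon_i = P(\mathfrak{d}_i') - P(\mathsf{H}) \leq P(\mathfrak{d}_i')$. Together these place $\epsilon_i$ in $[0, P(\mathfrak{d}_i')]$, completing the argument.

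I do not expect a substantive obstacle, since the statement is a direct consequence of monotonicity and nonnegativity; the only point I would take care to articulate is the meaning of $\epsilon_i$, which ties the algebra back to the geometry in Fig.~\ref{fig:decision_boundaries}. Writing $\epsilon_i = P(\mathfrak{d}_i' \setminus \mathsf{H}) = P\left( \mathfrak{d}_i' \cap \bigcup_{j \neq i} \mathfrak{d}_j \right)$, using $\mathsf{H} \subseteq \mathfrak{d}_i'$ and $\mathfrak{d}_i' \cap \mathfrak{d}_i = \emptyset$, identifies $\epsilon_i$ with the probability mass of samples that are $\mathfrak{d}_i$-negative yet positive for some other disease $\mathfrak{d}_j$, $j \neq i$. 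This explains both why the slack is generically strictly positive, because such cross-disease samples exist, and why it is bounded by $P(\mathfrak{d}_i')$, because those samples form a subset of the $\mathfrak{d}_i$-negative region. I would close by noting the two degenerate extremes: $\epsilon_i = 0$ when no $\mathfrak{d}_i$-negative sample carries another disease, and $\epsilon_i = P(\mathfrak{d}_i')$ when the healthy class $\mathsf{H}$ is empty.
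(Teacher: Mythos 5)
Your proof is correct and follows essentially the same route the paper takes: it derives $P(\mathsf{H}) \leq P(\mathfrak{d}_i')$ from the intersection identity $\mathsf{H} = \bigcap_i \mathfrak{d}_i'$ and then names the nonnegative slack $\epsilon_i$, with your identification $\epsilon_i = P\bigl( \mathfrak{d}_i' \cap \bigcup_{j \neq i} \mathfrak{d}_j \bigr)$ matching the paper's informal remark that some $\mathfrak{d}_i$-negative samples may be positive for another disease. The paper offers no more detail than this, so your write-up is, if anything, more complete.
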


It is possible to find a \textit{decision boundary} outlined by the classifier $g^{\star}$, and an encoding of the  images  $f^{\star}(\rvx) \in \mathbb{R}^d$ such that the \textit{latent representation} of the healthy samples $\mathfrak{d}_i'$ are on one side of the decision boundary, and that of the unhealthy samples $\mathfrak{d}_i$ are on the other. The inter-class segregation as outlined in Fig.~\ref{fig:cluster} will be more feasible with increasing $d \in \mathbb{N}$ as the degrees of freedom for the encoder $f^{\star}$ and the classifier $g^{\star}$ increase. We formalise the above idea in Theorem~\ref{thm:sep}.
\begin{figure}[h!]
\centering
\includegraphics[width=0.7\columnwidth]{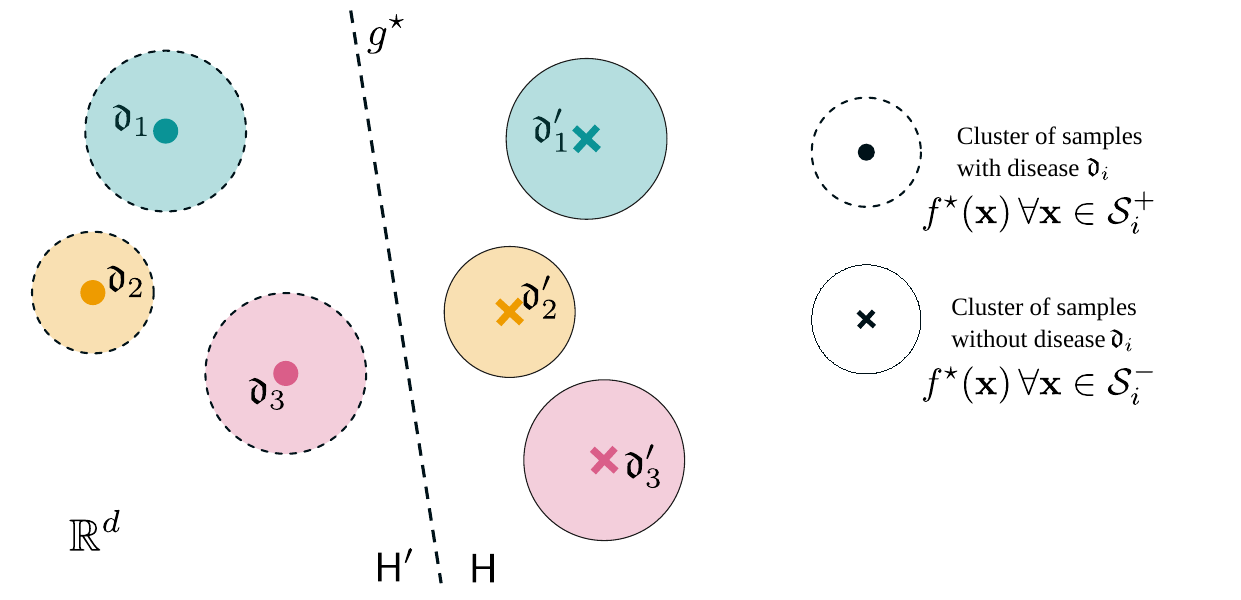}
\caption{Illustration of a general classifier depicting clusters of samples with and without diseases \( \mathfrak{d}_i \) and \( \mathfrak{d}_i' \). The clusters of samples with disease \( \mathfrak{d}_i \) are enclosed in dashed circles, while the clusters of samples without disease \( \mathfrak{d}_i \) are in solid circles. The decision boundary \( g^* \) differentiates between samples with and without diseases. The regions \( H \) and \( H' \) represent the spaces of the samples.}
\label{fig:cluster}
\end{figure}

\begin{theorem}
    For the diseases $\mathfrak{D}$, and datasets $\gS_i \, \forall i \in [n]$, $\exists d \in \mathbb{N}$ for which $\exists f^{\star}, g^{\star}: \forall i \in [n]$ 
    \begin{align*}
        P(\argmax g^{\star} \circ f^{\star}(\rvx) &= \mathfrak{d}_i \mid \rvx \in \gS_i^+) = 1,\\
        P(\argmax g^{\star} \circ f^{\star}(\rvx) &= \mathfrak{d}_i' \mid \rvx \in \gS_i^-) = 1.
    \end{align*}
    \label{thm:sep}
\end{theorem}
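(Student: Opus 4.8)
The plan is to read the statement as a finite interpolation (realizability) result. Every dataset $\gS_i = \gS_i^+ \cup \gS_i^-$ is finite, so the union $\gS \triangleq \bigcup_{i=1}^{n} \gS_i$ is a finite set of images, and the two displayed identities merely assert that $g^{\star} \circ f^{\star}$ reproduces the correct label on each point of this finite set. Each conditional probability is an average of $\{0,1\}$-indicators over finitely many samples, so it equals $1$ exactly when \emph{every} relevant sample is classified correctly. Hence it suffices to exhibit, for some $d \in \mathbb{N}$, an encoder $f^{\star}$ and a classifier $g^{\star}$ that label every element of $\gS$ correctly; no probabilistic argument is needed once that is done.

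First I would pin down the target labels. Enumerate the distinct images appearing in $\gS$ as $\rvx_1, \dots, \rvx_N$, and take the label set to be $\{\mathfrak{d}_1, \mathfrak{d}_1', \dots, \mathfrak{d}_n, \mathfrak{d}_n'\}$; for each $\rvx_k$ the two displays prescribe a target label $\ell(k)$, namely $\mathfrak{d}_i$ if $\rvx_k \in \gS_i^+$ and $\mathfrak{d}_i'$ if $\rvx_k \in \gS_i^-$. Second I would construct the maps in the smallest dimension that makes separation trivial, $d = N$. Define $f^{\star}(\rvx_k) = \mathbf{e}_k$, the $k$-th standard basis vector of $\mathbb{R}^N$ (extended arbitrarily off $\gS$); this sends the samples to an orthonormal, hence ``general position'', configuration that is separable for any dichotomy. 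Define $g^{\star}$ to be the linear readout whose weight matrix $W$ has a $1$ in row $\ell(k)$, column $k$, and $0$ elsewhere, so that $g^{\star}(f^{\star}(\rvx_k)) = W\mathbf{e}_k$ is the indicator vector of $\ell(k)$ and $\argmax g^{\star} \circ f^{\star}(\rvx_k) = \ell(k)$. Because each embedded point activates an independent coordinate, the response at one sample cannot interfere with any other; every sample attains its prescribed label, both probabilities equal $1$, and the existence of this single working $d$ establishes the $\exists d$ claim. Any larger $d$ works as well, matching the paper's remark that inter-class segregation becomes more feasible as $d$ grows.

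The step I expect to be the genuine obstacle is the well-definedness of $\ell(k)$, i.e.\ \emph{label consistency}. In the cross-disease setting one image may simultaneously be a positive sample of $\mathfrak{d}_i$ and a negative sample of a different disease $\mathfrak{d}_j$ (a pneumonia-positive scan is covid-negative), so the two displays would demand $\argmax g^{\star} \circ f^{\star}(\rvx) = \mathfrak{d}_i$ and $\argmax g^{\star} \circ f^{\star}(\rvx) = \mathfrak{d}_j'$ for the same $\rvx$, which is impossible for a single-output classifier. The orthonormal-embedding construction silently assumes each image carries exactly one target label, so to make the theorem rigorous I would state, before invoking the embedding, one of two operative assumptions: (i) the datasets $\gS_i$ are sample-disjoint, so each scan is used by only one disease; or (ii) all ``diseased'' labels collapse to a single class $\mathsf{H}'$ and all ``negative'' labels to $\mathsf{H}$, the reading suggested by the healthy/unhealthy discussion and Fig.~\ref{fig:cluster}, in which case one proves separability of the two merged clusters. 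Under either reading $\ell(k)$ is single-valued and the argument above goes through verbatim; without one of them the displayed system is over-determined, so identifying and fixing this consistency condition is the substantive part of the proof, while the embedding itself is routine.
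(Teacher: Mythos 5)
Your proof is correct and takes a genuinely different, and more rigorous, route than the paper's. The paper's own proof is a two-line informal appeal: it cites Cover's theorem to argue that a non-linear lift to a sufficiently high dimension makes the data linearly separable ``with high probability,'' and then invokes the universal approximation theorem to say a neural network can realize the required lift $f^{\star}$. That argument never actually delivers the stated probability-one conclusion (Cover's theorem is a counting statement about random dichotomies of points in general position, not a guarantee for a specific labelled dataset), whereas your observation that each conditional probability is a finite average of indicators reduces the theorem to exact interpolation on the finite set $\bigcup_i \gS_i$, which your one-hot embedding $f^{\star}(\rvx_k) = \mathbf{e}_k$ into $\mathbb{R}^N$ with a hand-built linear readout settles constructively and exactly. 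What the paper's approach buys is a narrative connection to why a \emph{trained} neural encoder might plausibly achieve separation in practice; what yours buys is an actual proof of the existential claim as literally stated. You also surface a genuine issue the paper's proof is silent on: the displayed conditions are over-determined whenever one image is simultaneously $\mathfrak{d}_i$-positive and $\mathfrak{d}_j$-negative, which the paper itself acknowledges can happen in its discussion preceding Proposition~\ref{as:hyper}, and your two proposed repairs (sample-disjointness, or collapsing to the healthy/unhealthy dichotomy of Fig.~\ref{fig:cluster}) are exactly the right ways to make $\ell(k)$ single-valued; the second matches the paper's intended reading. Flagging and resolving that consistency condition is a substantive improvement over the published argument.
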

\begin{proof}
    From the consequence of Cover's theorem \citep{cover1965geometrical}, it is established that a set of datapoints which is not linearly separable in a given dimension, can be made linearly separable with high probability through a non-linear transformation of the datapoints into a higher dimension. The function $f^{\star}$ which is a neural network acts as the non-linear transformation kernel, which can be affirmed by the universal approximation theorem \citep{guliyev2018approximation} which implies that any function can be approximated by a neural network by having the appropriate weights.
\end{proof}
Contrary to the setting in Theorem~\ref{thm:sep}, we will be learning an encoder $f$, and classifier $g$ through supervised training for a disease $\mathfrak{d}_i \in \mathfrak{D}$, and performing zero-shot evaluation using the resultant model on the diseases $\mathfrak{d}_j \in \mathfrak{D} \setminus \{ \mathfrak{d}_i \}$.
\begin{corollary}
    Given disease $\mathfrak{d}_i \in \mathfrak{D}$, and its corresponding dataset $\gS_i, \, \exists f_i^{\star}, g_i^{\star} : $ 
    \begin{align*}
        P(\argmax g_i^{\star} \circ f_i^{\star}(\rvx) &= \mathfrak{d}_i \mid \rvx \in \gS_i^+) = 1, &\textrm{(Supervised)}\\
        P(\argmax g_i^{\star} \circ f_i^{\star}(\rvx) &= \mathfrak{d}_i' \mid \rvx \in \gS_i^-) = 1, &\textrm{(Supervised)}\\
        P(\argmax g_i^{\star} \circ f_i^{\star}(\rvx) &= \mathfrak{d}_j \mid \rvx \in \gS_j^+) \leq 1, \quad \forall j \in [n] \setminus \{i\}, &\textrm{(Zero-Shot)}\\
        P(\argmax g_i^{\star} \circ f_i^{\star}(\rvx) &= \mathfrak{d}_j' \mid \rvx \in \gS_j^-) \leq 1, \quad \forall j \in [n] \setminus \{i\}. &\textrm{(Zero-Shot)}
    \end{align*}
    \label{cor:zeroshot}
\end{corollary}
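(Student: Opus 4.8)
The plan is to read the two supervised equalities off Theorem~\ref{thm:sep} by restriction, and to observe that the two zero-shot inequalities require no construction at all.

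First I would apply Theorem~\ref{thm:sep} to the one-element family of diseases $\{\mathfrak{d}_i\}$ with its dataset $\gS_i = \gS_i^+ \cup \gS_i^-$, rather than to the full collection $\mathfrak{D}$. The theorem then supplies a dimension $d \in \mathbb{N}$ together with an encoder $f_i^\star$ and classifier $g_i^\star$ realising $P(\argmax g_i^\star \circ f_i^\star(\rvx) = \mathfrak{d}_i \mid \rvx \in \gS_i^+) = 1$ and $P(\argmax g_i^\star \circ f_i^\star(\rvx) = \mathfrak{d}_i' \mid \rvx \in \gS_i^-) = 1$, which are precisely the two supervised lines. The underlying mechanism is inherited from the theorem: Cover's theorem makes the finite sample $\gS_i^+ \cup \gS_i^-$ separable after a non-linear lift into high enough dimension, and universal approximation lets $f_i^\star$ implement such a lift while $g_i^\star$ supplies the separating hyperplane, so the two training labels are reproduced with probability one.

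Second, for the two zero-shot lines I would note only that their left-hand sides are probabilities and therefore at most $1$ by definition; hence $P(\cdot) \le 1$ holds for every $j \in [n] \setminus \{i\}$ without imposing any further condition on $f_i^\star$ or $g_i^\star$. The formal content of the corollary thus resides entirely in the supervised half, while the zero-shot half merely records that, having fitted the model to $\mathfrak{d}_i$ alone, one cannot assert anything beyond the trivial bound.

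For this reason I do not expect a genuine technical obstacle; the only delicate point is interpretive. The meaningful asymmetry is between the supervised equality ($=1$) and the zero-shot inequality ($\le 1$): the possible gap $1 - P(\argmax g_i^\star \circ f_i^\star(\rvx) = \mathfrak{d}_j \mid \rvx \in \gS_j^+)$ is exactly where cross-disease transfer may degrade, and this slack is the analogue of the $\epsilon_i$ appearing in Proposition~\ref{as:hyper}. Turning the inequality into a useful lower bound would require an explicit hypothesis quantifying the visual similarity between $\mathfrak{d}_i$ and $\mathfrak{d}_j$, which goes beyond Cover's theorem and universal approximation and would be the real work in any strengthened version.
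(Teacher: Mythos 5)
Your proposal is correct and matches the paper's (implicit) derivation: the paper gives no separate proof for Corollary~\ref{cor:zeroshot}, treating it exactly as you do --- the supervised equalities follow by applying Theorem~\ref{thm:sep} to the single disease $\mathfrak{d}_i$ with dataset $\gS_i$, and the zero-shot lines are the trivial bound $P(\cdot)\leq 1$, with Remark~\ref{rmk:xdt} confirming that the substance lies in whether that slack is small in practice. Your closing observation that the gap $1-P(\cdot)$ plays the role of the $\epsilon_i$ in Proposition~\ref{as:hyper} is a fair reading of what the paper leaves informal.
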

\begin{remark}
    Through Corollary~\ref{cor:zeroshot}, we show that given the information of only disease $\mathfrak{d}_i$ dataset, it is still theoretically feasible to construct an encoder-classifier pair $(f_i, g_i)$ which matches the performance of a general encoder-classifier pair $(f^{\star}, g^{\star})$. However, even for $(f^{\star}, g^{\star})$ the performance guarantees do not extend beyond the training data. Moreover, finding the encoder-classifier using one disease dataset and testing it on another disease related to the same organ and modality is an interesting research direction to verify if the model can exhibit cross-disease transferability.
    \label{rmk:xdt}
\end{remark}

\subsection{Problem Formulation}
We aim to design a medical image classifier (like Fig.~\ref{fig:network architecture}) consisting of an encoder-classifier pair $(f, g)$ trained on a labelled dataset $\gS_i = \gS_i^+ \cup \gS_i^-$ of a disease $\mathfrak{d}_i \in \mathfrak{D}$. The learnt model is then used for zero-shot evaluation of the datasets $\gS_j, \forall j \in [n] \setminus \{i\}$. Metrics like classification accuracy, and F1 score can be used to judge the performance of the model.

\subsection{Network Architecture}
\label{subsec:network_arch}
We split the encoder $f$ into two parts through $h \circ f_V$, where $f_V$ is the vision encoder of \textsf{\textsf{MedCLIP}} which remains frozen throughout, and $h$ is a trainable transformer resulting in the latent representation $\rvz \in \mathbb{R}^d$. The latent representation is then passed to a classifier $g$ which returns a normalized class prediction vector $\hat{\rvy}$. The transformer $h$, and the classifier $g$ are trained through a supervised objective, wherein the loss function $\gL$ is passed $\hat{\rvy}$, along with the true label $\rvy$ associated with the sample $\rvx \in \gS$, where $\gS$ is the training data.
\begin{figure}[h!]
    \centering
    \includegraphics[width=0.8\columnwidth]{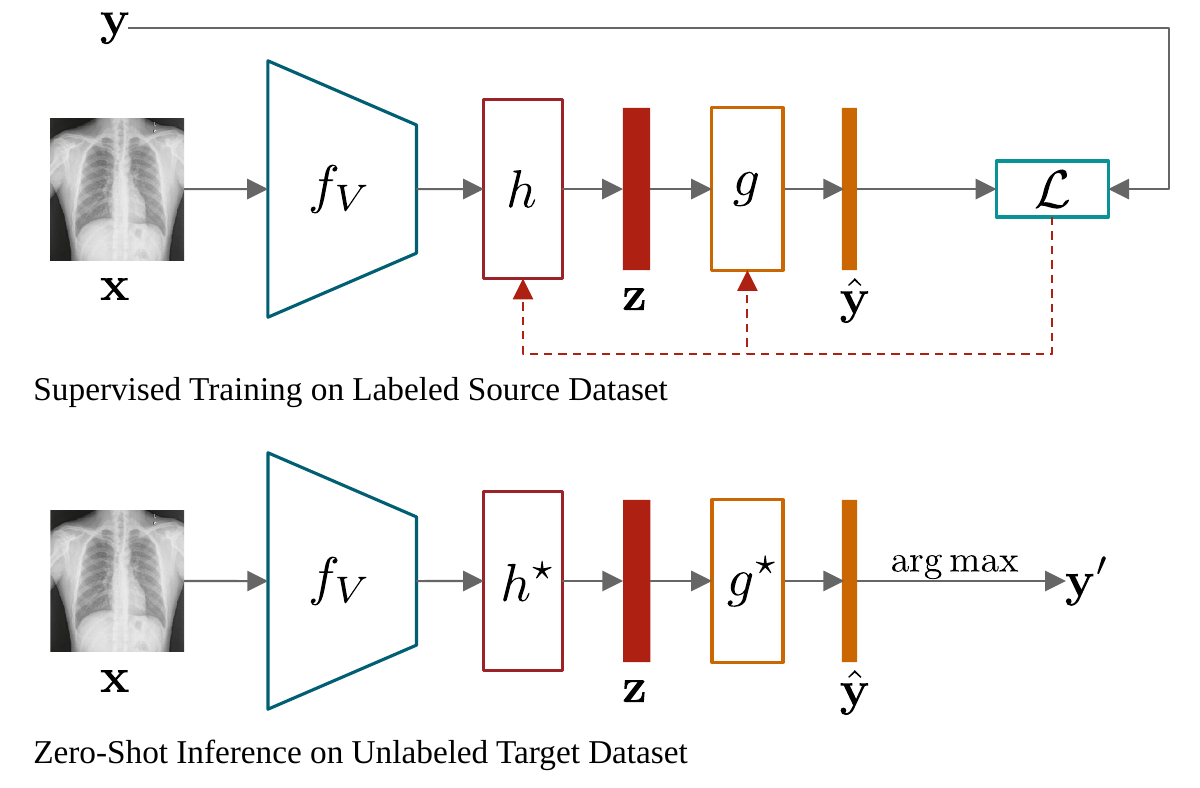}
    \caption{Illustration of the Training and inference networks. The encoder is split into two parts: $h \circ f_V$, where $f_V$ is the frozen vision encoder of \textsf{\textsf{MedCLIP}}, and $h$ is a trainable transformer resulting in the latent representation $\mathbf{z}$. The latent representation $\mathbf{z}$ is passed to a classifier $g$, which returns a normalized class prediction vector $\hat{\mathbf{y}}$. During supervised training, the transformer $h$ and classifier $g$ are trained using a loss function $\mathcal{L}$, which takes $\hat{\mathbf{y}}$ and the true label $\mathbf{y}$. In zero-shot inference, the trained transformer $h^\star$ and classifier $g^\star$ are used to output the class $\arg\max g^\star \circ h^\star \circ f_V(\mathbf{x})$.}

    \label{fig:network architecture}
\end{figure}

The transformer and classifier resulting from the supervised training is dentoed as $h^{\star}$, and $g^{\star}$, respectively. During zero-shot inference, for an input $\rvx$, we get the output class $\argmax g^{\star} \circ h^{\star} \circ f_V(\rvx)$.

\subsection{Loss Function}
We need a loss function that brings the latent representations of samples from the same class closer, and pushes apart samples from different classes which should result in clusters that are far apart. As the gap between the clusters of the two classes increases, it gets easier to find a hyperplane to separate them, which can act as a classifier. Therefore, we make use of a logarithmic contrastive loss defined as follows:

\begin{definition}
The Logarithmic Contrastive Loss (LC) for dataset $\gS$ is defined as:
\begin{align*}
    \gL_{\rm LC}(\gS) \triangleq \frac{1}{\binom{|\gS|}{2}} \sum_{\rvx_i, \rvx_j \in \gS} \underbrace{(2 \rvy_i^\top \rvy_j - 1)\cdot \log\left( \lnorm \rvz_i - \rvz_j \rnorm \right)}_{\rm Clustering} + \lambda \cdot \underbrace{\Big( \gH(\hat{\rvy}_i, \rvy_i ) + \gH(\hat{\rvy}_j, \rvy_j ) \Big)}_{\rm Cross-Entropy}; \, \lambda \in \mathbb{R}^+,
\end{align*}
where $\rvy_i = l(\rvx_i)$ is the one-hot encoded true label, $\rvz_i = h \circ f_V (\rvx_i)$ is the latent representation. The predicted class label is one-hot encoded as $\hat{\rvy}_i = \rve_{\argmax g \circ h \circ f_V( \rvx_i )}$ where $\mathbf{e}_k$ is the $k^{\rm th}$ standard basis, and $\gH(\cdot, \cdot)$ is the cross-entropy function.
\label{def:LC}
\end{definition}
The cross-entropy term ensures that the encoder and classifier learnt during training do not just focus on the clustered appearance in the latent space, but also on the accuracy on the training set. While the Cross-Entropy term is prone to overfitting, the Clustering term acts as a regulariser and generalises the model for unseen diseases, supported by Remark~\ref{rmk:xdt}.

In the clustering term, we write $\log(\lnorm \rvz_i - \rvz_j \rnorm)$ instead of simply $\lnorm \rvz_i - \rvz_j \rnorm$ to ensure that the sum over all the pair of datapoints $\rvx_i , \rvx_j \in \gS$ does not become too high $(\texttt{Inf})$ during computation. The hyperparamter $\lambda \in \mathbb{R}^+$ controls the level of influence the cross-entropy term has on the training objective. We can also replace $\lnorm \rvx_i - \rvz_j \rnorm$ with $\lnorm \rvx_i - \rvz_j  \rnorm + \epsilon$,  $\epsilon \approx 0^+$ to prevent the argument of $\log(\cdot)$ from becoming zero.

In this work, we will also be using cross-entropy loss, and euclidean contrastive loss in out framework to compare against the logarithmic contrastive loss defined above.
\begin{definition}
    The Cross-Entropy Loss (CE) for dataset $\gS$ is defined as:
    \begin{align}
        \gL_{\rm CE}(\gS) \triangleq \frac{1}{|\gS|} \sum_{\rvx_i \in \gS} \gH(\hat{\rvy}_i, \rvy_i).
    \end{align}
\end{definition}

\begin{definition}
    The Euclidean Contrastive Loss (EC) for dataset $\gS$ is defined as:
    \begin{align}
        \gL_{\rm EC}(\gS) \triangleq \frac{1}{\binom{|\gS|}{2}} \sum_{\rvx_i, \rvx_j \in \gS} (2 \rvy_i^\top \rvy_j - 1)\cdot  \lnorm \rvz_i - \rvz_j \rnorm.
    \end{align}
\end{definition}

\section{Experimental Settings}
In this section, we present the design of the experiments conducted to evaluate the performance of our proposed method. The experiments were carefully designed to test the efficacy of our approach across various scenarios and datasets. We begin by detailing the experimental setup, including the datasets used, implementation details and baseline methods for comparison.

\subsection{Datasets}\label{subsec:datasets}
We have used four publicly available Chest X-Ray (CXR) datasets in our experiments:
\paragraph{\textsf{Guangzhou-PN (G)}} Pneumonia dataset published in \cite{kermany2018identifying},  consists of 5,232 pediatric CXR images, which were collected from patients aged one to five years old at Guangzhou Women and Children’s Medical Center in Guangzhou, Guangdong Province, China. There are 1,349 normal and 3,883 Pneumonia x-rays (2,538 x-rays of  Bacterial pneumonia and 1,345 Viral pneumonia).

\paragraph{\textsf{Montgomery-TB (M)}} Montgomery dataset mentioned in \cite{jaeger2014two}, is a collection of 138 posterior-anterior CXR. Which was collected as part of the tuberculosis control program by the Department of Health and Human Services of Montgomery County, MD, USA. The collection includes 80 normal and 58 abnormal x-rays indicating manifestations of tuberculosis. The normal x-rays do not exhibit any signs of tuberculosis or other abnormalities.

\paragraph{\textsf{Shenzhen-TB (S)}} Shenzhen Hospital dataset also in \cite{jaeger2014two}, we will refer to as \textsf{Shenzhen-TB} consists of 662 CXR images as part of the routine care at the No.3 Hospital in Shenzhen, Guangdong Province, China. Of these images 326 normal and 336 abnormal x-rays.

\paragraph{\textsf{Covid (C)}} Covid dataset from \cite{rahman2021exploring} use 1383 test CXR images, this is the largest public COVID positive database. We divided this set into equal number of images in each class, positive (COVID) and negative (non-COVID).

\subsection{Baselines}
\label{baselines}
Our work Cross-Disease Transferability for Chest X-Rays (\textsf{XDT-CXR}) suggests that maximising the cluster gap among the diseases can help in making the model (trained on one dataset) transferable to other datasets. \textsf{XDT-CXR} is based on the vision encoder of \textsf{MedCLIP} \citep{wang2022MedCLIP} and a vision transformer that helps to learn the clusters in the latent space. 
\paragraph{\textsf{Statistical Best}} We consider a theoretical model which consistently predicts the same class for all samples. If the number of samples of one class is more than the other in the dataset in question, the statistical best will always predict the majority label. The Statistical Best model serves an an absolute benchmark as any model which is subpar to it has not learnt anything meaningful from the data.
\paragraph{\textsf{\textsf{MedCLIP}}} \textsf{MedCLIP} \citep{wang2022MedCLIP} is a Vision Language model that trains \textsf{CLIP} \cite{radford2021learning} on medical data. We perform a zero-shot evaluation on our dataset. 
\paragraph{\textsf{\textsf{LaFter}}} This is an approach with an unsupervised tuning of a zero-shot classifier leveraging the cross-modal transferabilities of VLM \cite{mirza2024LaFter}
\paragraph{\textsf{TPT}} Test-Time Prompt Tuning (TPT) \cite{shu2022test}  utilises the embedding space created by a VLM and adapts a prompt at test-time relying on the most confident predictions of multiple augmentations of the test image.
\paragraph{\textsf{\textsf{CXR-ML-GZSL}}} This is a Multi-label Generalised ZSL on the NIH Chest X-Ray dataset which consists of multiple diseases. In this work the model learns to map visual and semantic modalities to a shared latent space, ensuring relevant labels are ranked higher. It is named ML-GZSL due to the reason of it being trained only on seen classes without any auxiliary data for unseen classes. \textsf{\textsf{CXR-ML-GZSL}} \cite{hayat2021multi} uses \texttt{DenseNet} as the vision encoder. For all the other baselines we have used \textsf{MedCLIP}'s vision encoder as the backbone.

\subsection{Metrics}
For our Cross-dataset (XD) zero-shot (ZS) evaluation for binary classification problem, we use accuracy and F1-score to predict the model's capability. 

\paragraph{Accuracy} Accuracy is a metric commonly used in classification tasks to quantify the proportion of correctly classified instances among the total number of instances evaluated. 
\begin{align}
    \texttt{Acc} \triangleq \frac{\rm TP + TN}{\rm TP + TN + FP + FN},
\end{align}
where $\rm TP$ denotes true positives, $\rm TN$ denotes true negatives, $\rm FP$ denotes false positives, and $\rm FN$ denotes false negatives.

\paragraph{F1 score} The F1-score is a harmonic mean of precision and recall, providing a balance between these two metrics. It is particularly useful when dealing with imbalanced datasets.
\begin{align}
    \texttt{F1} \triangleq \frac{2 \cdot \text{precision} \cdot \text{recall}}{\text{precision} + \text{recall}},
\end{align}
where precision is the ratio of true positive predictions to the total number of positive predictions, and recall is the ratio of true positive predictions to the total number of actual positive instances.

\paragraph{Relative Accuracy} The relative change of accuracy with respect to the statistical best accuracy is defined as the relative accuracy 
\begin{align}
    \texttt{Acc'} \triangleq \frac{ \texttt{Acc} - \texttt{Acc}_{\rm SB}}{\texttt{Acc}_{\rm SB}},
\end{align}
where $\texttt{Acc}_{SB}$ is the statistical best accuracy and $\texttt{Acc}$ is the cross-disease zero-shot accuracy of our model.

\subsection{Implementation details}
This section provides an in-depth overview of the implementation details of our proposed method to ensure reproducibility and clarity of our approach.

\subsubsection{Supervised Training on Labeled Source Dataset}
In this stage we input an image which passes through a a frozen vision encoder from \textsf{MedCLIP} and the visual embeddings thus obtained are then fed into a trainable transformer. We used this pipeline to learn class-wise separable features, as theoretically motivated in Section 3. Our model processes visual data using a multi-layer transformer encoder followed by a linear classification layer. Specifically, it consists of a \texttt{nn.TransformerEncoder} with four \texttt{nn.TransformerEncoderLayer} layers, each with an input and output dimension of 512, four attention heads, and a feedforward network dimension of 256. The transformed input features are then passed through a \texttt{nn.Linear} layer, reducing the dimensionality from 512 to 16 for downstream task of classification. The choice of using a transformer compared to other architectures was motivated by the improved accuracy by using transformers as discussed in Appendix \ref{app_choice_architecture}.
During supervised training, both the transformer and the classifier are updated using a loss function that compares the predicted class vector with the true label.


\subsubsection{Zero-Shot Inference on Unlabeled Target Dataset}
For zero-shot inference, the trained transformer and classifier are employed alongside the frozen vision encoder to make predictions on new images. In this phase, the input image is first processed by the frozen vision encoder to extract visual features. These features are then passed through the pre-trained transformer, which transforms them into a latent representation. The classifier takes this latent representation and outputs a prediction vector. The class with the highest score in this prediction vector is selected as the final output. This allows the system to accurately classify images it has not seen during training, leveraging the robust feature extraction of the frozen vision encoder combined with the learned representations from the transformer and classifier.

\subsubsection{Generating Baseline Results}
Baselines using the \textsf{LaFter} and TPT pipelines were established with their official codebases. For \textsf{LaFter}, GPT-3.5 generated textual descriptions for medical dataset classes with specific chest X-ray prompts, which were encoded into embeddings to train a text classifier using cross-entropy loss. The trained classifier generated pseudo labels for self-supervised learning, and \textsf{MedCLIP}'s image encoder replaced CLIP’s, using label smoothing cross-entropy loss. Performance was evaluated on the test set for each dataset in Section 4.1, with Table \ref{tab:zero_shot_results} reporting LaFTer's diagonal values as presented in Table \ref{table:LaFter_performance} of Appendix \ref{app_lafter_cxr}.
We used the original TPT baseline settings, encoding medical text prompts and class labels into textual embeddings, and augmenting each test image into multiple views encoded into visual embeddings with a \textsf{MedCLIP} backbone. Confidence scores were calculated, unreliable ones filtered out, and remaining scores averaged for final classification. The model minimized entropy in the predicted distribution through backpropagation to fine-tune input prompts, accepting or rejecting them based on their effectiveness in reducing classification uncertainty at test-time. These steps were repeated for each dataset in Section 4.1.

\subsection{Model Training}
As discussed in section \ref{subsec:network_arch}, our framework  first performs supervised training on labeled source dataset and then conducts a zero-shot inference on the unlabeled target dataset.
During training, the medical images are passed through the \textsf{MedCLIP} vision encoder and the vision embeddings obtained are send as an input to the transformer. The loss function is used to maximise the cluster distance between the latent  representations learned using the vision transformer. Once the model is trained on one dataset, it is then evaluated on a different dataset. The vision embeddings obtained using a VLM holds valuable information as a result we experimented with \textsf{CLIP} vision encoder with ViT-B/32 backbone as well as \textsf{MedCLIP} vision encoder with Swin Transformer backbone. During exepriments we observed that the results on \textsf{MedCLIP} backbone outperformed the \textsf{CLIP} backbone as discussed in Appendix \ref{app_choice_visual_encoder}. As a result we used \textsf{MedCLIP} backbone for all our experiments. Each dataset was split into a 60-20-20 regime where 60\% of the data was used for training, 20\% for validation and 20\% for testing. The choice of optimizer was critical, and when experiments were conducted using Adam and SGD, we observed that SGD gave better results and hence we used SGD optimiser with a learning rate of $10^{-2}$. The experiments were conducted on a 2nd Gen AMD Epyc processors and on a single Nvidia A100 Tensor Core GPU. 

\section{Results}
In this section, we present the outcomes of the experiments conducted to evaluate the performance of our proposed method \textsf{XDT-CXR}.
\subsection{Supervised Learning and Zero-Shot Image Classification}
In our supervised setting, as discussed in section \ref{subsec:network_arch} that our model was trained in a supervised manner on a dataset $D_i$ and then evaluated on the test set of the same dataset $D_i$. The performance evaluation, detailed in Table \ref{tab:test-results1}, covers four datasets: TB cases from Shenzhen and Montgomery, COVID-19 cases, and pneumonia cases from Guangzhou, with metrics including accuracy (Acc) and F1-score (F1). For the Shenzhen-TB dataset, the model achieved an accuracy of 73.68\% and an F1-score of 0.67, indicating moderate precision and recall. The model performed best on the COVID-19 dataset, with an accuracy of 88.94\% and an F1-score of 0.74, highlighting its effectiveness in identifying COVID-19 cases. Conversely, the performance on the Guangzhou-PN dataset was lower, with an accuracy of 58.50\%, though the F1-score of 0.69 suggests a reasonable balance of precision and recall. The Montgomery-TB dataset results were strong, with an accuracy of 82.76\% and an F1-score of 0.78, reflecting robust performance in TB classification.
\begin{table}[h!]
\caption{Performance evaluation on the test set of the same dataset. Each dataset represents a chest disease, including TB cases from Shenzhen and Montgomery, COVID-19 cases, and pneumonia cases from Guangzhou. The metrics include accuracy (\texttt{Acc}) and F1-score (\texttt{F1}).}
\label{tab:test-results1}
\centering
\small
\begin{tabular}{cccccccc}
\toprule
\multicolumn{2}{c}{\textsf{Shenzhen-TB}} & \multicolumn{2}{c}{\textsf{Covid}} & \multicolumn{2}{c}{\textsf{Guangzhou-PN}} & \multicolumn{2}{c}{\textsf{Montgomery-TB}} \\ \cmidrule(r){1-8} 
  \texttt{Acc} & \texttt{F1} & \texttt{Acc} & \texttt{F1} & \texttt{Acc} & \texttt{F1} & \texttt{Acc} & \texttt{F1} \\ \midrule
 73.68	& 0.67 & 88.94	& 0.74 & 58.50 & 0.69 & 82.76 & 0.78 \\ \bottomrule
\end{tabular}
\end{table}
Furthermore to understand the zero-shot capability, the model trained on dataset $D_i$ is then evaluated on the remaining $D_{n-i}$ datasets in a zero-shot manner. Table \ref{tab:zero_shot_results} gives a comparative analysis of zero-shot evaluation of \textsf{XDT-CXR} on the datasets mentioned in section \ref{subsec:datasets} as well as the baselines described in section \ref{baselines}. The table highlights the exceptional zero-shot performance of XDT-CXR across various medical image classification tasks. XDT-CXR achieves impressive results even when trained on a different dataset than the one being tested. For instance, when trained on the Shenzhen-TB dataset, it shows an accuracy of 80.12\% and an F1-score of 0.65 on the same dataset, while performing notably well on the COVID-19 dataset with an accuracy of 65.41\% and an F1-score of 0.43. Its performance on the Guangzhou-PN dataset, where it achieves an accuracy of 79.31\% and an F1-score of 0.77, and on the Montgomery-TB dataset, where it achieves an accuracy of 71.43\ and an F1-score of 0.56, further underscores its robust generalization capabilities. The varying performance across different datasets also underscores the importance of the vision encoder's embedding space and the kernel transformation process. Effective segregation in the latent space is crucial for the model to distinguish between normal and pathological features across different diseases.
Furthermore, the comparison with other methods, such as \textsf{MedCLIP} and TPT, reveals the competitive nature of the \textsf{XDT-CXR} in the zero-shot scenario, often outperforming these established benchmarks. This demonstrates the potential of the XDT framework to be a valuable tool in settings where rapid adaptation to new diseases is necessary, such as during outbreaks of novel pathogens.

\begin{table}[h]
\caption{Zero-shot evaluation on the test set of a different dataset.}
\label{tab:zero_shot_results}
\centering
\small
\begin{tabular}{lcccccccc}
\toprule
\multicolumn{1}{l}{} & \multicolumn{2}{c}{\textsf{Shenzhen-TB}} & \multicolumn{2}{c}{\textsf{Covid}} & \multicolumn{2}{c}{\textsf{Guangzhou-PN}} & \multicolumn{2}{c}{\textsf{Montgomery-TB}} \\ \cmidrule(r){2-9} 
Method &  \texttt{Acc} & \texttt{F1} & \texttt{Acc} & \texttt{F1} & \texttt{Acc} & \texttt{F1} & \texttt{Acc} & \texttt{F1} \\ \midrule
 \textsf{Statistical Best}              & 57.14 & 0.60 & 73.75 & 0.42 & 62.50 & 0.77 & 51.72 & 0.65 \\ \midrule
 \textsf{{MedCLIP}}                       & 42.11	& 0.57 & 80.84 & 0.63 & 80.93 & 0.86 & 55.17 & 0.61 \\
 \textsf{{LaFter}}                        & 57.14	& 0.00 & 26.25 & 0.42 & 62.5  & 0.77 & 51.72 & 0.00 \\
 \textsf{TPT}                           & 50.76	& \textbf{0.65} & 26.19 & 0.00 & 72.97 & 0.58 & 42.03 & 1.00 \\
 \textsf{{CXR-ML-GZSL}}                   & 57.14	& 0.6  & 73.75 & 0.42& 62.66	& 0.77 & 58.62 & 0.68\\ \midrule
 \textsf{{XDT-CXR} (Shenzhen-TB)}         & x	    & x    & 80.12 & 0.65 & 76.28	& 0.83 & 75.86 & 0.72\\
 \textsf{{XDT-CXR} (Covid)}               & 65.41	& 0.43 & x	   & x    & \textbf{88.14}	& 0.90 & 55.17 & 0.13\\
 \textsf{{XDT-CXR} (Guangzhou-PN)}        & 56.39 & 0.60 & 52.50 & 0.44 & x	    & x    & \textbf{79.31} & 0.77\\
 \textsf{{XDT-CXR} (Montgomery-TB)}       & \textbf{71.43} & 0.56 & \textbf{81.56} & 0.61 & 79.01	& 0.82 & x  &  x\\ \bottomrule
\end{tabular}
\end{table}

Fig. \ref{fig:radar_plot} visualizes the performance of various models across the four datasets: Shenzhen-TB, Covid, Guangzhou-PN, and Montgomery-TB. The models compared are Statistical Best, Zero-shot MedCLIP, LaFter, TPT, CXR-ML-GZSL, and XDT-CXR. Each axis represents one of the datasets, with values indicating the accuracy percentages achieved by each model. The observations include XDT-CXR's superior performance and highest accuracy across all datasets. It outperforms all other models in the Covid dataset and maintains strong performance in the Guangzhou-PN and Montgomery-TB datasets. Statistical Best and CXR-ML-GZSL also perform well, whereas LaFter and TPT show lower accuracy. The plot underscores XDT-CXR's robustness and generalization capabilities, highlighting its superior zero-shot performance in diverse medical imaging tasks.

\begin{figure}[h!]
    \centering
    \includegraphics[width=0.6\linewidth]{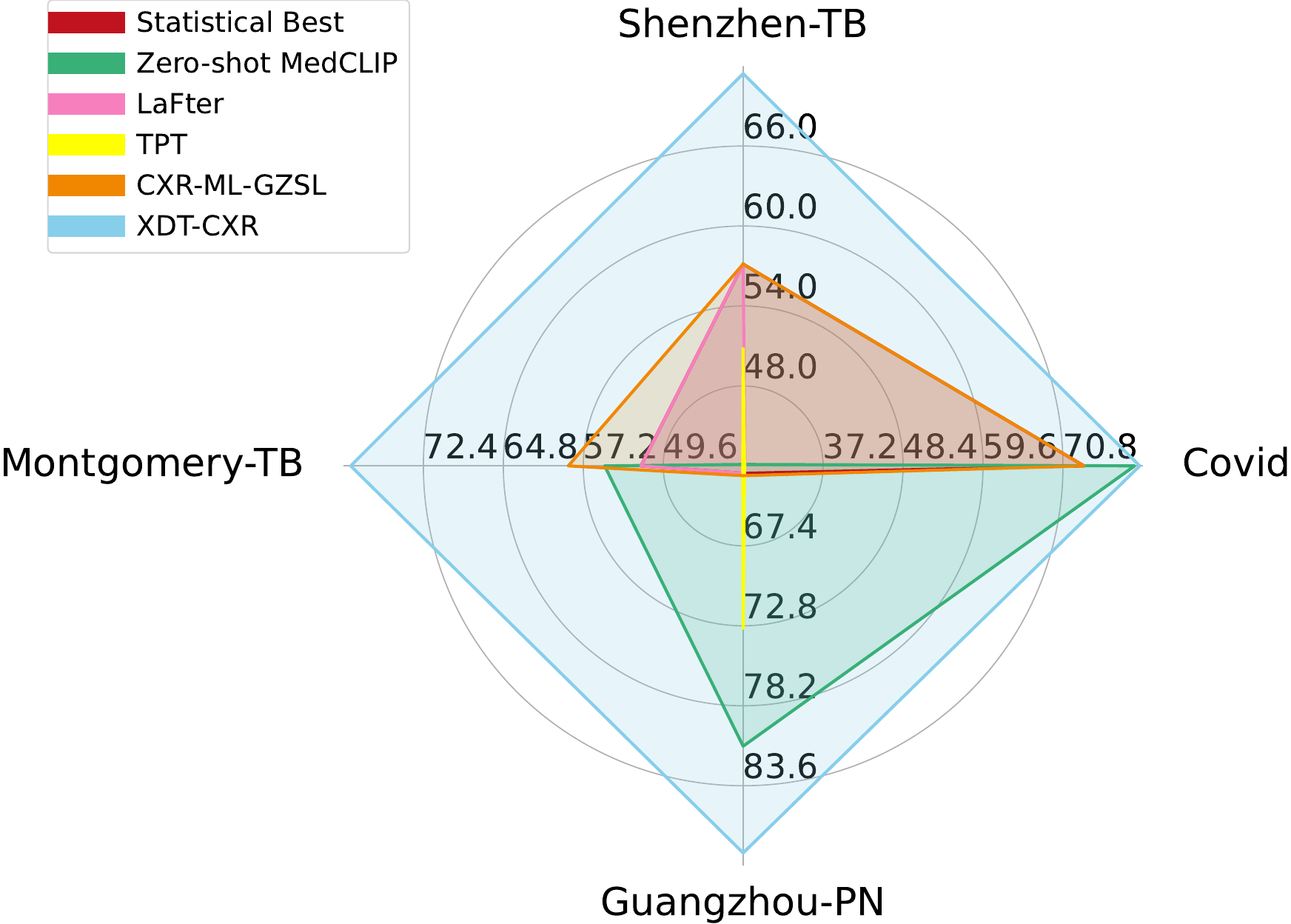}
    \caption{Radar plot showing the accuracy of different baselines across four datasets: Montgomery-TB, Shenzhen-TB, Guangzhou-PN, and Covid. The baselines compared include Statistical Best, Zero-shot \textsf{MedCLIP}, \textsf{LaFter}, TPT, \textsf{CXR-ML-GZSL}, and \textbf{\textsf{XDT-CXR}}.}
    \label{fig:radar_plot}
\end{figure}

\subsection{Impact of Different Loss Functions}
We study the significance of our loss function contributing to the zero-shot performance of our framework \textsf{XDT-CXR}. Table \ref{tab:loss-functions-results} and Table \ref{tab:test-results} highlights the results of the different loss functions compared to the one we used in our training pipeline. The $\gL_{\rm LC}$ loss function seems to facilitate better generalization of learned features across different diseases, which is crucial for zero-shot learning scenarios. This is likely due to its ability to balance between multiple objectives, enhancing the model's capability to identify universal patterns indicative of various chest diseases. The robustness provided by $\mathcal{L}_{\mathrm{LC}}$ indicates its potential as a reliable choice for training diagnostic models that need to operate across varied and unforeseen medical conditions, reflecting real-world clinical needs where rapid adaptation to new diseases is often necessary. Table \ref{tab:loss-functions-results} presents a comparative analysis of various loss functions used in the training pipeline \textsf{XDT-CXR} across the four datasets. The results show that $\gL_{\rm EC}$ generally performs well, with the highest accuracy on Shenzhen-TB (77.44\%) and Guangzhou-PN (72.76\%), though its F1 scores are not the highest across all datasets. $\gL_{\rm CE}$ shows variability in performance, with its best F1 score (0.77) achieved on Guangzhou-PN. On the other hand, $\gL_{\rm LC}$, especially with $\lambda=0.001$, yields consistently high F1 scores and competitive accuracies, notably achieving the highest F1 scores on Covid (0.74) and Montgomery-TB (0.78). Overall, $\gL_{\rm LC}$ with $\lambda=0.001$ appears to offer balanced performance across different datasets, excelling in F1 scores. Further analysis on the choice of a suitable loss function is available in Appendix \ref{app_choice_loss}.

\begin{table}[h!]
\caption{Comparative analysis of different loss functions for the training pipeline \textsf{XDT-CXR} on the test set of the same dataset }
\label{tab:loss-functions-results}
\centering
\small
\begin{tabular}{lcccccccc}
\toprule
\multicolumn{1}{l}{ \textsf{\textsf{XDT-CXR}}} & \multicolumn{2}{c}{\textsf{Shenzhen-TB}} & \multicolumn{2}{c}{\textsf{Covid}} & \multicolumn{2}{c}{\textsf{Guangzhou-PN}} & \multicolumn{2}{c}{\textsf{Montgomery-TB}} \\ \cmidrule(r){2-9} 
  &  \texttt{Acc} & \texttt{F1} & \texttt{Acc} & \texttt{F1} & \texttt{Acc} & \texttt{F1} & \texttt{Acc} & \texttt{F1} \\ \midrule
$\gL_{\rm EC}$              &\textbf{ 77.44}	& 0.66& 77.44	 & 0.65& \textbf{72.76} & 0.82& 75.86 &  0.72\\ 
$\gL_{\rm CE}$              & 42.86	& 0.60& 73.75	 & 0.0 & 62.50 & 0.77& 48.28 &  0.65\\  
$\gL_{\rm LC}, \lambda=0$   & 75.94	& 0.69& 86.12	 & 0.67& 60.26 & 0.71& \textbf{82.76} &  0.78\\ 
$\gL_{\rm LC}, \lambda=0.001 $ & 73.68	& 0.67 & \textbf{88.94}	& 0.74 & 58.50 & 0.69 & \textbf{82.76} & 0.78 \\ \bottomrule
\end{tabular}
\end{table}

\begin{table}[h!]
\caption{Comparative analysis of different loss functions for the training pipeline \textsf{XDT-CXR} on the zero-shot evaluation on the test set of different datasets}
\label{tab:test-results}
\centering
\small
\begin{tabular}{lccccccccc}
\toprule
\multicolumn{1}{l}{\textsf{\textsf{XDT-CXR}}} & Loss & \multicolumn{2}{c}{\textsf{Shenzhen-TB}} & \multicolumn{2}{c}{\textsf{Covid}} & \multicolumn{2}{c}{\textsf{Guangzhou-PN}} & \multicolumn{2}{c}{\textsf{Montgomery-TB}} \\ \cmidrule(r){3-10} 
 Trained on &  & \texttt{Acc} & \texttt{F1} & \texttt{Acc} & \texttt{F1} & \texttt{Acc} & \texttt{F1} & \texttt{Acc} & \texttt{F1} \\ \midrule
 \textsf{Shenzhen-TB}  &$\gL_{\rm EC}$  & x	     & x& \textbf{86.62}	& {0.66}& 37.50	& 0.00& \textbf{79.31}  & {0.77}\\
 \textsf{Covid}   &  & 69.93	 & {0.67}& x	    & x& 73.88	& 0.83& \textbf{79.31}  & {0.77}\\
 \textsf{Guangzhou-PN}  & & 45.11	 & 0.53& 43.82  & 0.41& x	    & x& 72.41  & 0.6\\
 \textsf{Montgomery-TB}  & & \textbf{72.93}	 & {0.67}& 80.69  &{ 0.66}& \textbf{79.17	}& {0.86}& x      &  x\\ 
 \midrule
 \textsf{Shenzhen-TB}  &$\gL_{\rm CE}$ & x	     & x& \textbf{26.25	} & {0.42}& \textbf{62.5}	 & {0.77}& \textbf{48.28}   & {0.65}\\
 \textsf{Covid}   && \textbf{57.14}	 & {0.66}& x	     & x& 37.5	 & 0.00& 51.72   & 0.00\\
 \textsf{Guangzhou-PN}  & & 42.86	 & 0.60& \textbf{26.25}	 & {0.42}& x	     & x& 48.28   & {0.65}\\
 \textsf{Montgomery-TB}  & & 42.86	 & 0.60& \textbf{26.25}	 & {0.42}& 62.5	 & 0.77& x       &  x\\ 
\midrule
 \textsf{Shenzhen-TB} &$\gL_{\rm LC}$ & x	     & x& 80.41 & 0.64& 75.32 & 0.82& 75.86 & 0.72\\
 \textsf{Covid}  &$\lambda=0$ & \textbf{73.68}  & {0.58}& 	x  & x& \textbf{80.93} & 0.83& \textbf{82.76} & 0.78\\
 \textsf{Guangzhou-PN} & & 42.86  & 0.53& 35.00 & 0.37& x	 & x& 72.41 & 0.73\\
 \textsf{Montgomery-TB} & & 71.43  & 0.56& \textbf{81.13} & 0.60& 79.01 & 0.83& x     &  x\\ 
 \midrule
 \textsf{Shenzhen-TB}  &$\gL_{\rm LC}$      & x	    & x    & 80.12 & {0.65} & 76.28	& 0.83 & 75.86 & 0.72\\
 \textsf{Covid}   &$\lambda=0.001$      & 65.41	& 0.43 & x	   & x    & \textbf{88.1}4	& 0.90 & 55.17 & 0.13\\
 \textsf{Guangzhou-PN}  &      & 56.39 & 0.60 & 52.50 & 0.44 & x	    & x    & \textbf{79.31} & 0.77\\
 \textsf{Montgomery-TB}  &   & \textbf{71.43} & {0.56} & \textbf{81.56} & 0.61 & 79.01	& 0.82 & x  &  x\\ \bottomrule
\end{tabular}
\end{table}

\subsection{Impact of Training Size on Zero-shot Capability}
In Fig. \ref{fig:transfer_ratio}, we present the relative accuracy vs. size ratio plot. The relative accuracy is defined as the accuracy of the model with respect to the statistical best (majority label memorisation), and the size ratio is the ratio of the test set size to the train set size. Therefore, a size ratio $< 1$ means that the model was trained on more data than it was tested on, and a size ratio $> 1$ means that the model was trained on a small dataset and tested on a larger one. We have depicted size ratio $< 1$ with red circles, and size ratio $> 1$ with green squares in the scatter plot. The abbreviations S, C, G, and M denote Shenzhen-TB, Covid, Guangzhou-PN, and Montgomery-TB datasets, respectively. The plot compares the performance of models when trained on one dataset and tested on another, highlighting how changes in train/test set sizes impact model accuracy. It shows that models generally achieve higher accuracy when the train set is larger than the test set. This suggests that the size and diversity of the training data are crucial for improving model performance in cross-dataset evaluations.

\begin{figure}[h!]
    \centering
    \includegraphics[width=0.8\linewidth]{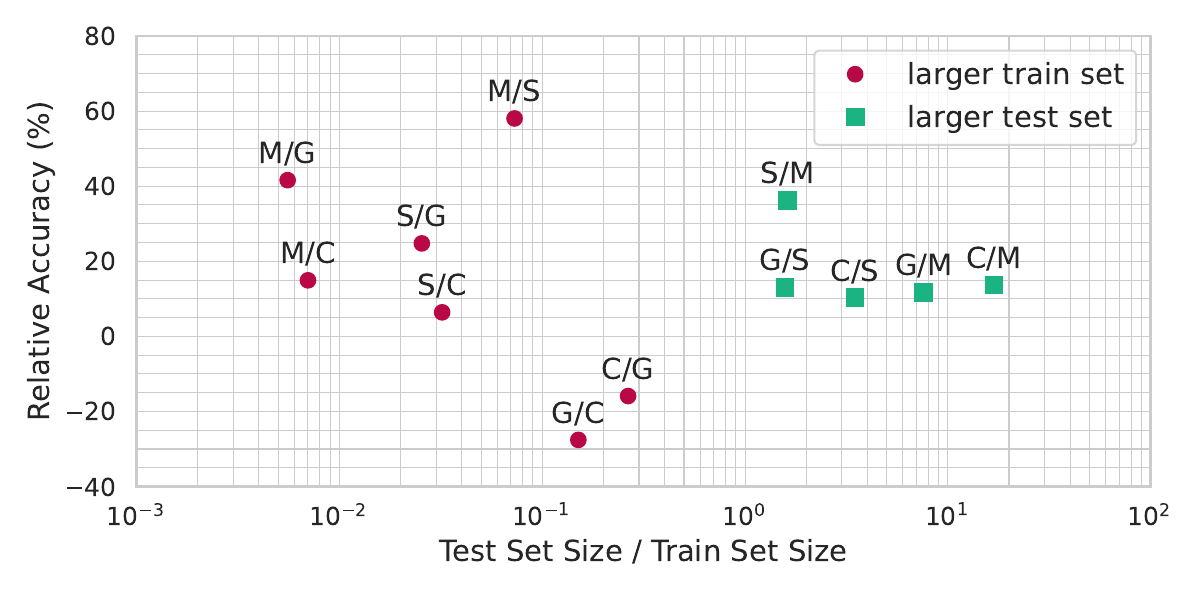}
    \caption{Scatter plot showing the relative accuracy vs. size ratio. A size ratio $< 1$ means that the model was trained on more data than it was tested on, and a size ratio $> 1$ means that the model was trained on a small dataset and tested on a larger one. }
    \label{fig:transfer_ratio}
\end{figure}


\section{Discussion}

The results presented in Tables \ref{tab:zero_shot_results} and \ref{tab:test-results}  provide important insights into the effectiveness of the Cross-Disease Transferability (XDT) framework when applied to chest X-ray (CXR) analysis. In the \textbf{supervised learning} scenario, the models demonstrate varied performance across different diseases and datasets. For instance, the model achieved an accuracy of 88.94\% and an F1-score of 0.74 for COVID-19, indicating a strong capability to identify this disease under controlled, disease-specific training conditions. In contrast, the model's performance on pneumonia from Guangzhou shows a notable drop in accuracy to 58.50\%, suggesting that certain visual disease markers may not be as distinctly captured or may overlap with other conditions, complicating the model's learning process.
In the \textbf{zero-shot classification} scenario, where the models trained on one dataset are tested on others without additional training, the results highlight the potential and challenges of applying the XDT framework to real-world diagnostic tasks. Notably, the \textsf{XDT-CXR} model trained on COVID-19 data performed exceptionally well when applied to the Guangzhou-PN dataset, achieving the highest accuracy of 88.14\% and an F1-score of 0.90. This suggests a significant overlap in visual features relevant to the model between these two diseases, or a robustness of the learned features that generalize well across different conditions within the same organ.
However, other cross-dataset evaluations, such as the \textsf{XDT-CXR} model trained on the Shenzhen-TB dataset evaluated on COVID-19, showed lower performance (accuracy of 80.12\% and F1-score of 0.65) compared to the supervised setting. This indicates the inherent challenge in zero-shot learning where the model must generalize to entirely unseen conditions without losing specificity or sensitivity.

Another noteworthy factor in our experimental setting was that the Guangzhou-PN dataset consists of paediatric data, whereas the other datasets include data from adult patients. In our experiments we investigated two-way cross-disease transferability, i.e., our results show how a model trained on the Guangzhou-PN (G) dataset performs on other datasets in a ZSL setting as well as how other datasets such as Covid (C), Shenzhen-TB (S) or Montgomery-TB (M) which consists of adult CXRs are transferable in giving predictions on Guangzhou-PN datasets as presented in Table \ref{tab:zero_shot_results}.

Overall, the findings suggest that while the XDT framework holds significant promise for enhancing diagnostic efficiency using limited training data, extensive testing and refinement are required to maximize its applicability and reliability in diverse clinical settings.

\paragraph{Limitations.} The XDT framework has a limitation that it can only perform binary classification i.e., it can say whether a medical image is diagnosed with a disease or not. However, it cannot be used to classify an image among different diseases. This is supported by the fact that visual examination like X-rays and CT-scans especially for lungs is performed in addition to viral/bacterial tests and only serves as a reinforcement. Moreover, despite the promising results as discussed above, the limitations in binary classification capability, where the model can only determine the presence or absence of disease rather than identifying specific conditions, remain a critical challenge. Furtehrmore, we understand the limitation that models trained on lung disease might not very well be transferable to a renal disease or a brain disease, given that it affects different organs. However, cross-modal transferability, i.e., checking if (disease A, modality A) is transferable to (disease B, modality B), is an exciting line of enquiry for the future. Few works in the literature perform cross-modal transferability, such as \cite{gu2023cross}, which can motivate the work.
This limitation points to the necessity for further research into multi-class models within the XDT framework that could handle more complex diagnostic tasks.

\section{Conclusion}
In conclusion, this study investigates cross-disease transferability (XDT) in medical imaging, showing that binary classifiers trained on one disease can perform zero-shot classification on another affecting the same organ. Using chest X-rays (CXR), we demonstrate that models trained on one pulmonary disease can predict others, which is useful when data on new diseases is scarce. The XDT framework employs vision encoder embeddings and kernel transformation to differentiate between diseased and non-diseased states, making it especially beneficial in resource-poor settings. However, it currently only supports binary classification, identifying disease presence or absence without distinguishing between multiple diseases. Despite this, the \textsf{XDT-CXR} framework surpasses other zero-shot learning baselines, highlighting its potential as a valuable diagnostic tool.

\acks{We would like to acknowledge the  discussions with Dr. Jennifer Kate van Heerden from the Nuffield Department of Surgical Sciences, University of Oxford, Oxford. Her insights and expertise significantly contributed to the development and refinement of this work.}

\bibliography{sample}

\newpage
\appendix
\appendixheading{Appendix}
\section{Choice of network architecture}
\label{app_choice_architecture}
Table \ref{tab:choice_architecture} below shows ablation experiments using a multi-layer perceptron (MLP) block and a linear layer instead of a transformer block, resulting in weaker performance on medical tasks. Experiments with a ResNet model without \textsf{MedCLIP} highlight the importance of using a \textsf{MedCLIP} image encoder with a transformer for better cross-disease transferability.
\begin{table}[h]
\caption{Performance metrics for various models across different training and testing sets.}
\label{tab:choice_architecture}
\centering
\small
\begin{tabular}{lccccc}
\toprule
Model & Train & S & C & G & M \\ \midrule
\textsf{MedCLIP} + Linear & S & 42.86 & 26.25 & 62.5 & 48.28 \\ 
 & C & 42.86 & 26.25 & 62.5 & 48.28 \\ 
 & G & 42.86 & 26.25 & 62.5 & 48.28 \\ 
 & M & 42.86 & 26.25 & 62.5 & 48.28 \\ \midrule
\textsf{MedCLIP} + MLP & S & 57.14 & 73.75 & 37.5 & 51.72 \\ 
 & C & 57.14 & 73.75 & 37.5 & 51.72 \\ 
 & G & 57.14 & 73.75 & 37.5 & 51.72 \\ 
 & M & 57.14 & 73.75 & 37.5 & 51.72 \\ \midrule
ResNet & S & 89.47 & 63.85 & 43.43 & 62.07 \\ 
 & C & 63.16 & 93.93 & 76.28 & 68.97 \\ 
 & G & 48.87 & 49.67 & 37.50 & 51.72 \\ 
 & M & 60.90 & 48.30 & 57.05 & 68.97 \\ \midrule
Statistical Best & x & 54.17 & 73.75 & 62.5 & 51.72 \\ \bottomrule
\end{tabular}
\end{table}

\section{Choice of Visual Encoder}
\label{app_choice_visual_encoder}
Table \ref{tab:choice_visual_encoder} provides performance metrics for different models and architectures, comparing the TPT and LaFter models using the two different visual encoders: CLIP and MedCLIP. The metrics are reported for three categories: G, S, and M. For the TPT model, the MedCLIP encoder generally outperforms the CLIP encoder across all categories, with notable improvements in categories S and M. Specifically, MedCLIP boosts the performance in category S (54.08\%) and M (57.97\%) compared to CLIP's lower scores (50.76\% and 42.03\%, respectively). The choice of encoder impacts model performance variably across different categories. Since the overall accuracy of MedCLIP was more than CLIP, we selected MedCLIP as the visual encoder for all our experiments.
performs better comparatively.

\begin{table}[h]
\caption{Performance metrics for various models and architectures}
\label{tab:choice_visual_encoder}
\centering
\small
\begin{tabular}{llccc}
\toprule
Model   & Arch & G & S & M \\ \midrule
TPT     & \textsf{CLIP} & 72.97 & 50.76 & 42.03 \\ 
        & \textsf{MedCLIP} & 72.98 & 54.08 & 57.97 \\ \midrule
\textsf{LaFter}  & \textsf{CLIP} & 62.5 & 57.14 & 51.72 \\ 
        & \textsf{MedCLIP} & 78.53 & 42.86 & 48.28 \\ \bottomrule
\end{tabular}
\end{table}

\begin{table}[h]
\caption{Overall accuracy comparison between \textsf{CLIP} and \textsf{MedCLIP} shows that \textsf{MedCLIP} performs better comparatively.}
\label{table:overall_accuracy}
\centering
\begin{tabular}{cc}
\toprule
\textsf{CLIP} & \textsf{MedCLIP} \\ \midrule
337.12 & 354.7 \\ \bottomrule
\end{tabular}
\end{table}

\section{Adapting Baseline Methods to CXR Datasets}
\label{app_lafter_cxr}

Table \ref{table:LaFter_performance} presents the performance of the \textsf{LaFter} model on the test sets of different datasets: Shenzhen-TB (S), Covid (C), Guangzhou-PN (G), and Montgomery-TB (M). For each training dataset (denoted as Train), the table shows the model's accuracy across the different evaluation datasets. For instance, when \textsf{LaFter} is trained on Shenzhen-TB and evaluated on the same (S), it achieves an accuracy of 57.14\%. Conversely, when trained on Covid (C) and evaluated on Guangzhou-PN (G), it achieves a higher accuracy of 76.28\%. The diagonal values reported in Table \ref{tab:zero_shot_results}, which are not shown here, likely reflect the model's performance when trained and evaluated on the same dataset, providing a baseline for comparison. This table helps illustrate how well \textsf{LaFter} adapts across various CXR datasets, highlighting its strengths and weaknesses in different training and evaluation scenarios.
\begin{table}[h]
\caption{Performance was evaluated on the test set for each dataset in Section 4.1, with Table \ref{tab:zero_shot_results} reporting \textsf{LaFter}'s diagonal values.}
\label{table:LaFter_performance}
\centering
\small
\begin{tabular}{lccccc}
\toprule
Model & Train & S & C & G & M \\ 
\midrule
\textsf{LaFter} & S & 57.14 & 73.75 & 37.5 & 51.72 \\ 
 & C & 42.86 & 26.25 & 76.28 & 42.86 \\ 
 & G & 42.86 & 26.25 & 62.5 & 48.28 \\ 
 & M & 57.14 & 73.75 & 37.5 & 51.72 \\ 
 \bottomrule
\end{tabular}
\end{table}

\section{Choice of Loss Function}
\label{app_choice_loss}
Table \ref{table:ranking_loss} ranks the accuracy of various models trained on different datasets and evaluated with different loss functions. For each train-eval pair (where the training dataset and evaluation dataset are denoted as S: Shenzhen-TB, C: Covid, G: Guangzhou-PN, M: Montgomery-TB), the table lists the ranking of accuracy across four loss functions: $\gL_{\rm EC}$, $\gL_{\rm CE}$, $\gL_{\rm LC}$ with $\lambda=0$, and $\gL_{\rm LC}$ with $\lambda=0.001$. Rankings are assigned from 0 (best) to 4 (worst). For example, when trained on Shenzhen-TB and evaluated on Covid, $\gL_{\rm EC}$ is ranked 1st, whereas $\gL_{\rm CE}$ is ranked 4th in the same scenario. The table provides insights into how different loss functions perform across various dataset combinations, highlighting which configurations yield the highest accuracy.

\begin{table}[h]
\caption{Ranked accuracy for each train-eval pair across all loss functions.}
\label{table:ranking_loss}
\centering
\small
\begin{tabular}{cccccc}
\toprule
Trained on & Loss & S & C & G & M \\ \midrule
S & $\gL_{\rm EC}$ & 0 & 1 & 4 & 1 \\ 
C &  & 2 & 0 & 3 & 2 \\ 
G &  & 2 & 2 & 0 & 2.5 \\ 
M &  & 1 & 3 & 1 & 0 \\ \midrule
S & $\gL_{\rm CE}$  & 0 & 4 & 3 & 4 \\ 
C &  & 4 & 0 & 4 & 4 \\ 
G &  & 3.5 & 4 & 0 & 4 \\ 
M &  & 4 & 4 & 4 & 0 \\ \midrule
S & $\gL_{\rm LC}, \lambda=0$  & 0 & 2 & 2 & 2.5 \\ 
C &  & 1 & 0 & 2 & 1 \\ 
G &  & 3.5 & 3 & 0 & 2.5 \\ 
M &  & 2.5 & 2 & 2.5 & 0 \\ \midrule
S & $\gL_{\rm LC}, \lambda=0.001$ & 0 & 3 & 1 & 2.5 \\ 
C &  & 3 & 0 & 1 & 3 \\ 
G &  & 1 & 1 & 0 & 1 \\ 
M &  & 2.5 & 1 & 2.5 & 0 \\ \bottomrule
\end{tabular}
\end{table}

Then, we have calculated the mean average rank (MAR) for each loss function, and observe that logarithmic contrastive (LC) loss has the lowest MAR, and therefore we choose it for our framework. 

\begin{table}[h]
\caption{Mean average rank (MAR) for each loss function.}
\label{table:mean_rank}
\centering
\small
\begin{tabular}{lc}
\toprule
Loss & MAR \\ \midrule
$\gL_{\rm EC}$  & 2.04 \\ 
$\gL_{\rm CE}$  & 3.88 \\ 
$\gL_{\rm LC}, \lambda=0$  & 2.21 \\ 
$\gL_{\rm LC}, \lambda=0.001$ & 1.88 \\ \bottomrule
\end{tabular}
\end{table}

\end{document}